\begin{document}

\pagenumbering{gobble}

\title{\LARGE \bf Deep Learning for Robotic Mass Transport Cloaking}

\author{Reza~Khodayi-mehr~and~Michael~M.~Zavlanos%
\thanks{Reza Khodayi-mehr and Michael M. Zavlanos are with the Department of Mechanical Engineering and Materials Science, Duke University, Durham, NC 27708, USA, {\tt\footnotesize \{reza.khodayi.mehr, michael.zavlanos\}@duke.edu}.
This work is supported in part by AFOSR under award \#FA9550-19-1-0169, and by NSF under award CNS-1932011.}
}


\maketitle

\begin{abstract}
We consider the problem of mass transport cloaking using mobile robots. The robots move along a predefined curve that encloses a safe zone and carry sources that collectively counteract a chemical agent released in the environment. The goal is to steer the mass flux around a desired region so that it remains unaffected by the external concentration. We formulate the problem of controlling the robot positions and release rates as a PDE-constrained optimization, where the propagation of the chemical is modeled by the advection-diffusion (AD) PDE. We use a neural network (NN) to approximate the solution of the PDE. Particularly, we propose a novel loss function for the NN that utilizes the variational form of the AD-PDE and allows us to reformulate the planning problem as an unsupervised model-based learning problem. Our loss function is discretization-free and highly parallelizable. Unlike passive cloaking methods that use metamaterials to steer the mass flux, our method is the first to use mobile robots to actively control the concentration levels and create safe zones independent of environmental conditions. We demonstrate the performance of our method in simulations.
\end{abstract}

\IEEEpeerreviewmaketitle

\section{Introduction} \label{sec:intro}
From natural disasters like wildfires \cite{PDTRWF2009MBGA} to environmental pollution \cite{OSIFE2003} and chemical leaks \cite{ILMI1988K,CMSNGE2003L}, many life-threatening processes can mathematically be modeled as distributed parameter systems (DPSs) using transport equations like the advection-diffusion (AD) PDE \cite{IFA1998R}. Creating safe zones in any of these domains, where the temperature or concentration are maintained within the survival limits for humans or animals, is a problem of paramount significance.
The goal is to ensure that these zones remain unaffected by external environmental conditions. In this paper, we propose to use mobile robots to create such safe zones, or cloaks, in mass transport systems modeled by time-dependent AD-PDEs.

Cloaking in transport systems is an active area of research. Examples include mass transport cloaking \cite{FSLT2013GP}, thermal transport cloaking \cite{TC2016SLJNH}, acoustic cloaking \cite{PAC2007CS}, and cloaking for elasticity \cite{CEPETIF2006MBW}. The proposed methods typically rely on metamaterials to passively alter the transport phenomenon around a desired region, decreasing in this way the effect of the external environment. Mathematically, this is achieved by exploiting the invariance of the associated PDEs under curvilinear transformations. However, these methods generally cannot maintain desired environmental conditions in the cloaked zone. 
This is the case in \cite{FSLT2013GP} that creates safe zones in mass transport systems by enclosing those zones in a cloak consisting of cocentric spheres of varying diffusivities. While the concentration inside the safe zone is maintained below the levels of the surrounding environment, it cannot be precisely controlled. This limitation can be alleviated using active cloaking methods, specifically using robots that carry sources that counteract the release of a chemical agent in the environment. By controlling the position of the robots and the release rates of their sources, safe zones of desired concentration levels can be created, even under changing external environmental conditions.
 To the best of our knowledge, this is the first work to consider the problem of active cloaking in mass transport systems using mobile robots.

We model the propagation of the chemical agent in the environment by a time-dependent AD-PDE and assume that the robots move on predefined curves that enclose the desired safe zone. Then, we formulate the proposed active cloaking problem as a PDE-constrained optimization problem whose solution returns collision-free optimal robot trajectories and corresponding source release rates that maintain desired concentration levels in the safe zone. To solve this optimization problem, we propose a new method that employs neural networks (NNs) to approximate the solution of the PDE. 
A major contribution of the proposed method is a novel loss function used to train the NN that employs the variational form of the PDE as opposed to its differential form.
Given this loss function, we formulate the robotic mass transport cloaking (MTC) problem as an unsupervised model-based learning problem that we solve using state-of-the-art stochastic gradient descent algorithms \cite{DLP2017C}.
Common model-based learning approaches in the robotics literature modify a known model using large amounts of labeled data collected before training, e.g., samples of robot trajectories, to account for uncertainties and unmodeled dynamics in the model; see e.g., \cite{SLROAUNSGP2016BMSK}. To the contrary, here we train the NN to approximate the solution of the AD-PDE using unlabeled sample points of the space-time generated during training.
The physics captured by the PDE guide the training of the NN. 

The task of creating safe zones has been considered in the robotics literature, e.g., in containment control \cite{DCCLNPU2012MRM,CCMN2008JFEB} and perimeter patrol \cite{MAPFC2009EAK,MPPAS2008AKK,BCMRPP2009MPAC} problems. In the former, a network of robots are driven to a target destination while contained within a polygon created by the leaders. A possible application is to secure and remove hazardous materials without contaminating the surroundings \cite{CCMN2008JFEB}. In the latter, the objective is to prevent an adversary from penetrating an area of interest by periodically monitoring its boundaries using a team of mobile robots \cite{MPPAS2008AKK}.
A closely related problem is considered in \cite{VFCC2004BCPR} where the objective is to contain a herd of animals within a safe zone.
Unlike these problems, here we are interested in using mobile robots to create safe zones in DPSs where the time-dependent transport phenomenon is governed by PDEs.

The MTC problem is closely related to disturbance control of DPSs \cite{GMCASICDPS2008D} and source identification (SI) \cite{meC1, meC2, meJ1, meC4, meC7}. The objective in the former is to maintain the equilibrium of a DPS against an exogenous disturbance moving in the domain whereas in the latter it is to identify a source function and a corresponding concentration field that matches an observed set of concentrations. Similar to SI, we formulate the MTC problem as a PDE-constrained optimization problem \cite{OPDEC2008HPUU}. The difference is that in the MTC problem the unknown source terms are constrained by the predefined robot curves and the goal is to match the concentration at the safe zone to desired values.
PDE-constrained optimization problems have been studied extensively in the literature \cite{OPDEC2008HPUU}. A major challenge in solving these problems is dealing with the size of the discretized model obtained using numerical methods like the finite element (FE) method \cite{FEM2012H}. This so-called curse of dimensionality is severely exacerbated when the problem is time-dependent.
The only viable solution, currently available, is to use model reduction techniques but these approaches compromise accuracy and stability for speed \cite{SMRMLSS2001ASG}. 

In this paper, we propose an alternative approach to the MTC problem that employs NNs to approximate the solution of the AD-PDE constraint.
Using NNs we can incorporate the AD-PDE constraint in the loss function and obtain a penalty method that circumvents the computational challenges of discretization-based methods to the MTC problem. Specifically, these methods need to reconstruct and exactly enforce new time-dependent AD-PDE constraints at every iteration of the optimization process when the robot positions or release rates change. Unlike discretization-based methods, our learning-based approach is highly parallelizable and can be easily extended to account for different objectives and constraints.
%
%
An overview of different approaches for solving PDEs using NNs can be found in \cite[Ch. 04]{INNMDE2015YYK}.
One group of approaches utilize NNs to memorize the solutions of PDEs. Particularly, they solve the PDE using a numerical method to obtain labeled training data and often utilize convolutional NNs, as powerful image processing tools, to capture the numerical solution in a supervised learning way \cite{SPPDEPANN2017KLY}.
These approaches do not replace numerical methods but rather rely on them and introduce an extra layer of approximation.
%
There also exist methods, called FE-NNs, which represent the governing equations at the element level using artifical neurons \cite{FENNSDE2005RUU}. 
FE-NNs scale with the number of discretization points and are similar in spirit to numerical methods.

Most closely related to the method proposed in this paper are approaches that also directly train a NN to approximate the solution of the PDE in an unsupervised learning process. One of the early works of this kind is \cite{ANNSOPDE1998LLF} that uses the residual of the PDE to define the required loss function. In order to remove the constraints from the training problem, the authors only consider simple domains for which the boundary conditions (BCs) can be manually enforced by a change of variables. 
Although these approaches attain comparable accuracy to numerical methods, they are impractical since, in general, enforcing the BCs might be as difficult as solving the original PDE.
Following a different approach, the work in \cite{SPDEANN2013R} utilizes a constrained back-propagation algorithm to enforce the initial and boundary conditions during training. 
In order to avoid solving a constrained training problem, the authors in \cite{MLPNNNUTM2009SYM} add the constraints corresponding to BCs to the objective as penalty terms. 
Similarly, in \cite{DLASPDE2017SS} the authors focus on the solution of PDEs with high dimensions using a long short-term memory architecture. Note that none of the above methods of solving PDEs using NNs, focus on controlling those PDEs nor do they involve mobile robots in any way.

Compared to the literature discussed above, the contributions of this work can be summarized as follows.
To the best of our knowledge, this work is the first in the robotics literature to consider the active MTC problem. Compared to passive cloaking methods that employ metamaterials to steer the transport phenomenon, our active MTC method can more precisely control the concentration levels in the safe zone.
We propose a new approach to this problem that relies on deep learning to circumvent the computational challenges involved in using discretization-based methods to solve time-dependent PDE-constrained optimization problems.
A major contribution of our proposed learning-based method is a novel loss function that we define to train the NN in an unsupervised model-based way, using the variational form of the PDE.
The advantages of the proposed loss function, compared to existing approaches that use the differential form of the PDE, are two-fold. First, it contains lower order derivatives so that the solution of the PDE can be estimated more accurately.  Note that it becomes progressively more difficult to estimate a function from its higher order derivatives since differential operators are agnostic to translations. Second, it utilizes  the integral (variational) form of the PDE that considers segments of space-time as opposed to single points and imposes fewer smoothness requirements on the solution. Note that variational formulations have been successfully used in the FE literature for a long time \cite{FEM2012H}. 
%
Compared to model-based learning approaches in the robotics literature that rely on supervised learning and large amounts of labeled data, here we propose an unsupervised method that relies on the physics captured by the PDE to guide the training.
Note that although here we consider a specific control problem for the AD-PDE, the proposed principals generalize to other control problems subject to arbitrary dynamics.

The remainder of this paper is organized as follows. In Section \ref{sec:probF} we formulate the MTC problem. Section \ref{sec:sol} is devoted to the solution of the MTC problem using NNs. We present our simulation results in Section \ref{sec:sim} and finally, Section \ref{sec:concl} concludes the paper.

\section{Problem Formulation} \label{sec:probF}

\subsection{Advection-Diffusion PDE}
Let $\Omega \subset \reals^d$ denote a domain of interest where $d$ is its dimension and let $\bbx \in \Omega$ denote a point in this domain and $t \in [0, T]$ denote the time variable.
Furthermore, consider a velocity vector field $\bbu : [0,T] \times \Omega \to \reals^d$ and its corresponding diffusivity field $\kappa: [0,T] \times \Omega \to \reals_+$.
Then, the transport of a quantity of interest $c: [0,T] \times \Omega \to \reals$, e.g., a chemical concentration, in this domain is described by the time-dependent advection-diffusion (AD) PDE \cite{IFA1998R}
\begin{equation} \label{eq:ADPDE}
\dotc = -\nabla \cdot( - \kappa \nabla c + \bbu \, c) + s ,
\end{equation}
where $\dotc = {\partial c}/{\partial t}$ denotes the time derivative of the concentration and $s: [0,T] \times \Omega \to \reals$ is the time-dependent source field that models any chemical reaction or mechanical action that leads to the release or collection of the chemical \cite{RDEA2011K}.

Given an appropriate set of initial conditions (ICs) and boundary conditions (BCs), it can be shown that the AD-PDE \eqref{eq:ADPDE} is well-posed and has a unique solution \cite{IFA1998R}. In this paper, we use the following IC and BCs:
\begin{subequations} \label{eq:IBCs}
\begin{align}
&c(0,\bbx) = g_0(\bbx) \ \ \, \for \bbx \in \Omega , \label{eq:IC} \\
&c(t,\bbx) = g_i(t,\bbx) \ \for \bbx \in \Gamma_i ,  \label{eq:BCs}
\end{align}
\end{subequations}
where $\Gamma_i$ for $i \in \set{1, \dots, n_b}$ denote the boundary segments of $\Omega$ and $g_0: \Omega \to \reals$ and $g_i: [0,T] \times \Gamma_i \to \reals$ possess appropriate regularity conditions \cite{IFA1998R}.
Equation \eqref{eq:IC} describes the state of the concentration field before the release of the chemical agent whereas equation \eqref{eq:BCs} prescribes the concentration value along the boundary segment $\Gamma_i$ as a time-varying function.

\subsection{Mass Transport Cloaking using Mobile Robots}
Given the AD-PDE \eqref{eq:ADPDE} and the domain of interest $\Omega$, consider a team of robots that carry sources that collectively counteract the release of a chemical. These mobile robots control the source term $s(t,\bbx)$ in the AD-PDE \eqref{eq:ADPDE}.
The goal of the robots is to create a safe zone in the domain where the concentration is controlled to remain at a certain level. This can be achieved by formulating a planning problem that controls the robot positions in the domain as well as the release rates of their sources.
Figure \ref{fig:setup} shows a typical scenario where we assume the chemical agent is released outside $\Omega$ and its effect is modeled by the BCs.
\footnote{Inclusion of the chemical release in the domain is equivalent to adding an extra source term in \eqref{eq:ADPDE}.}
\begin{figure}[t!]
  \centering
    \includegraphics[width=0.35\textwidth]{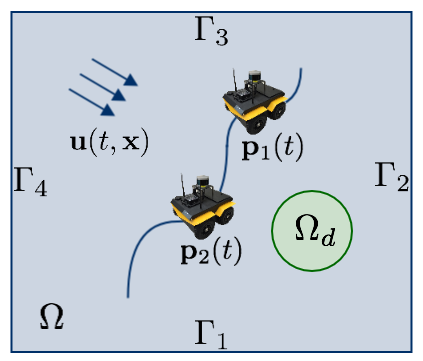}
  \caption{A typical setup for the mass transport cloaking problem. The robots release or collect a chemical agent in an optimal rate along their paths $\bbp_j(t)$, in response to changing environmental conditions, so that the concentration at the safe zone $\Omega_d$ is maintained at a desired value.} \label{fig:setup}
\end{figure}
In this figure, $\Omega_d \subset \Omega$ denotes the desired safe zone. The goal is to maintain the release of the chemical invisible to an observer inside $\Omega_d$. Specifically, let $N$ denote the number of robots. Then, the path of robot $j$ is denoted by $\bbp_j: [0,T] \to \Omega \setminus \Omega_d$ for $j \in \set{1, \dots, N}$.
We make the following assumptions.
\begin{assumption} [PDE Input-Data] \label{assum:inputData}
The geometry of the domain $\Omega$ and its boundaries $\Gamma_i$, the velocity $\bbu(t,\bbx)$ and corresponding diffusivity $\kappa(t,\bbx)$ fields, and the IC $g_0(\bbx)$ and BCs $g_i(t,\bbx)$  for $i \in \set{1, \dots, n_b}$ are known.
\end{assumption}

The assumption that the domain $\Omega$ is known is a reasonable one. Given the knowledge of the domain, the velocity and diffusivity fields can be estimated using computational fluid dynamics or from measurements. Moreover, knowledge of the release of a chemical agent captured by the BCs, might be known for a specific task or can be measured.

\begin{assumption} [Robot Paths] \label{assum:path}
The robot paths are constrained on a curve $\bbgamma: [0,1] \to \Omega \setminus \Omega_d$ that is simple, differentiable, and does not intersect with the safe zone $\Omega_d$ or other obstacles in the domain.
\end{assumption}

The curve $\bbgamma$ often encloses $\Omega_d$ and in many robotic applications corresponds to the perimeter of the safe zone \cite{DCCLNPU2012MRM,MAPFC2009EAK}.
Given this curve, we can define the path of robot $j$ as
\begin{equation} \label{eq:path}
\bbp_j(t) = \bbgamma(\xi_j(t)) ,
\end{equation}
where $j \in \set{1, \dots, N}$ and $\xi_j: [0,T] \to [0,1]$ is a parameterization that maps the time interval $[0,T]$ to $[0,1]$. Note that $\xi_j(t)$ is not necessarily monotone, i.e., the robot can move back and forth along the curve $\bbgamma$.

The release of the source, carried by robot $j$, is limited to the location of the robot at any given time. In order to capture this local effect, we model the source term using a Gaussian function centered at the robot. Particularly, let $l_s > 0$ denote the length-scale of the Gaussian function and $a_j: [0,T] \to \reals$ the release rate of the source. Then, the source term $s(t, \bbx)$ in \eqref{eq:ADPDE} is given by
\begin{equation} \label{eq:source}
s(t, \bbx) = \sum\nolimits_{j=1}^N a_j(t) \exp \left( - \frac{ \norm{\bbp_j(t) - \bbx}^2 }{2 \, l_s^2} \right) ,
\end{equation}
where $\bbp_j(t)$ denotes the location of the robot $j$ at time $t$.

\begin{prob} [Mass Transport Cloaking] \label{prob:MTC}
Let $c_d(t, \bbx)$ denote the desired concentration level in the safe zone $\Omega_d$. Find the optimal paths $\bbp_j^*(t)$ and release rates $a_j^*(t)$ to maintain concentration $c_d(t, \bbx)$ inside $\Omega_d$.
\end{prob}

Problem \ref{prob:MTC} can be formulated as the following PDE-constrained optimization problem:
\begin{align} \label{eq:mission}
&\min_{a_j(t), \bbp_j(t)} \norm{c(t,\bbx) - c_d(t, \bbx)}_{\Omega_d}^2, \\
&\ \ \ \ \st \ccalL c = s(t,\bbx) \ \ \ \ \ \ \for \bbx \in \Omega , \nonumber \\
& \ \ \ \ \ \ \ \ \, c(0,\bbx) = g_0(\bbx) \ \ \, \for \bbx \in \Omega , \nonumber \\
& \ \ \ \ \ \ \ \ \, c(t,\bbx) = g_i(t,\bbx) \ \for \bbx \in \Gamma_i , \nonumber
\end{align}
for $i \in \set{1, \dots, n_b} \and j \in \set{1, \dots, N}$, where $t \in [0,T]$ in the relevant constraints, $\ccalL c = \dotc + \nabla \cdot( - \kappa \nabla c + \bbu \, c)$ is the differential operator corresponding to the AD-PDE \eqref{eq:ADPDE}, and the objective of the optimization is defined as
\begin{equation} \label{eq:obj}
\norm{c(t,\bbx) - c_d(t, \bbx)}_{\Omega_d}^2 = \int_0^T \int_{\Omega_d} \abs{ c(t,\bbx) - c_d(t, \bbx) }^2 \, d\bbx \, dt.
\end{equation}
Note that in equation \eqref{eq:obj}, the spatial integration is performed only over the safe zone $\Omega_d$.


\section{Solution using Neural Networks} \label{sec:sol}
In this section we first discuss the loss function that we use to train a neural network (NN) to approximate the solution of the time-dependent AD-PDE \eqref{eq:ADPDE} for a given set of PDE input-data. Then, given this loss function, we address the mass transport cloaking problem \eqref{eq:mission}. The proposed loss function uses the variational formulation of the AD-PDE while our solution to problem \eqref{eq:mission} relies on adding this loss function, that captures the constraints in problem \eqref{eq:mission}, to objective \eqref{eq:obj} in the form of a penalty term, to formulate an unconstrained training problem that can be solved using stochastic gradient descent methods.

\subsection{Neural Network Approximation of the AD-PDE} \label{sec:NN}
Let $\bbtheta \in \reals^n$ denote the weights and biases of the NN, a total of $n$ trainable parameters. Then the solution of the AD-PDE can be approximated by the nonlinear function $f(t,\bbx; \bbtheta)$, where $f: [0,T] \times \Omega \to \reals$ maps the inputs $t$ and $\bbx$ of the NN to its scalar output.
The objective of training is to learn the parameters $\bbtheta$ of the network so that $f(\cdot)$ approximates the solution of the AD-PDE \eqref{eq:ADPDE} as well as possible. To capture this, we need to define a loss function $\ell: \reals^n \to \reals_+$ that reflects how well the function $f(t,\bbx; \bbtheta)$ approximates the solution of AD-PDE \eqref{eq:ADPDE}. A typical approach is to consider the residual of the PDE when $f(t,\bbx; \bbtheta)$ is substituted into the differential form \eqref{eq:ADPDE}. This approach has two problems. First, for the AD-PDE \eqref{eq:ADPDE} it requires evaluation of second order derivatives of $f(t,\bbx; \bbtheta)$. Estimating a function from its higher order derivatives is inefficient since differentiation only retains slope information and is agnostic to translation. Second, training the differential form of the PDE amounts to learning a complicated field by only considering the PDE-residual at training points, i.e., a measure-zero set, 
which is ineffective in capturing the physics of the transport phenomenon.

We follow a different approach here that addresses both of these issues. Particularly, we rely on the variational form of the PDE. Let $v: [0,T] \times \Omega \to \reals$ be an arbitrary compactly supported test function. Multiplying \eqref{eq:ADPDE} by $v(t, \bbx)$ and integrating over the spatial and temporal coordinates we get
\begin{align*}
\int_0^T \int_{\Omega} v \left[ \dotc + \nabla \cdot( - \kappa \nabla c + \bbu \, c) - s \right] d\bbx \, dt = 0, \ \ \forall v.
\end{align*}
Performing integration by parts we have
\begin{align*}
 \int_0^T \int_{\Omega} \dotc \, v d\bbx \, dt &= \int_{\Omega} \left[ c(T) v(T) - c(0) v(0) \right] d\bbx \\
& - \int_0^T \int_{\Omega} c \, \dot{v} d\bbx \, dt = - \int_0^T \int_{\Omega} c \, \dot{v} d\bbx \, dt .
\end{align*}
Note that $v(0) = v(T) = 0$ since $v(t, \bbx)$ is compactly supported over the temporal coordinate. Similarly,
$$ \int_0^T \int_{\Omega} v \, \nabla \cdot \kappa \nabla c \, d\bbx \, dt = 
- \int_0^T \int_{\Omega} \nabla v \cdot \kappa \nabla c \, d\bbx \, dt , $$
where again the boundary terms vanish since $v(t, \bbx)$ is compactly supported.
Finally, note that for velocities far below the speed of sound the incompressibility assumption holds and $\nabla \cdot \bbu = 0$. Putting together all of these pieces, we get the variational form of the time-dependent AD-PDE as 
\begin{equation} \label{eq:varPDE}
l(c,v) = \int_0^T \int_{\Omega} \left[ \nabla c \cdot \left( \kappa \nabla v + \bbu \, v \right) - c \, \dot{v} - s \, v \right] \, d\bbx \, dt = 0.
\end{equation}

The variational form \eqref{eq:varPDE} only requires the first-order spatial derivative and also an integration over a non-zero measure set, i.e., the support of the test function, as opposed to a single point used in the differential form. The test function acts as a weight on the PDE residual and the idea is that if \eqref{eq:varPDE} holds for a reasonable number of test functions $v(t, \bbx)$ with their compact supports located at different regions in space-time $[0,T] \times \Omega$, the function $f(t, \bbx)$ has to satisfy the PDE.
A very important feature of the test function $v(t, \bbx)$ is that it is compactly supported. This allows local treatment of the PDE as opposed to considering the whole space-time at once and is the basis of the finite element method \cite{FEM2012H}. More details regarding our proposed loss function and its advantages can be found in \cite{meJ5}.

Given the variational form \eqref{eq:varPDE}, we can now define the desired loss function to approximate the solution of the AD-PDE \eqref{eq:ADPDE} for a given set of input data. Consider a set of $n_v$ test functions $v_k(t, \bbx)$ sampling the space-time $[0,T] \times \Omega$, a set of $n_0$ points $\bbx_k \in \Omega$ corresponding to the IC, and sets of $n_{b,i}$ points $(t_k, \bbx_k) \in [0,T] \times \Gamma_i$ for the enforcement of the BCs. Then, we define the loss function $\ell: \reals^n \to \reals_+$ as
\begin{align} \label{eq:loss}
\ell(\bbtheta) &= w_1 \sum\nolimits_{k=1}^{n_v} \abs{ l(f, v_k) }^2 
 + \frac{w_2}{n_0} \sum\nolimits_{k=1}^{n_0} \abs{f(0,\bbx_k) - g_0(\bbx_k)}^2  \nonumber \\
& + \frac{w_3}{\bbarn_b} \sum\nolimits_{i=1}^{n_b} \sum\nolimits_{k=1}^{n_{b,i}} \abs{f(t_k, \bbx_k) - g_i(t_k, \bbx_k)}^2 ,
\end{align}
where $l(f, v_k)$ is given by \eqref{eq:varPDE}, $\bbw \in \reals_+^3$ stores the penalty weights corresponding to each term, and $\bbarn_b = \sum_{i=1}^{n_b} n_{b,i}$ is the total number of training points for the BCs. Note that in the first term in \eqref{eq:loss}, the integration is limited to the support of $v(t, \bbx)$ which is computationally very advantageous.
Furthermore, this loss function is lower-bounded by zero. Since this bound is attainable for the exact solution of the AD-PDE \eqref{eq:ADPDE}, the value of the loss function is an indicator of how well the NN approximates the solution of the PDE.

Training using the loss function \eqref{eq:loss} is an instance of unsupervised learning since the solution is not learned from labeled data. Instead, the training data here are unlabeled samples of space-time and the physics captured by the AD-PDE \eqref{eq:ADPDE} guides learning of the parameters $\bbtheta$. In that sense, our approach is a model-based method as opposed to a merely statistical method that automatically extracts features and cannot be easily interpreted.
Moreover, unlike common robotic model-based learning approaches that modify a known model to account for the labeled data, obtained through numerous experiments, our approach proposes a novel use of NNs to learn the model from scratch without relying on labeled data.

\begin{rem} [Convergence]
By the universal approximation theorem, for a large enough number of trainable parameters $n$, the NN can approximate any smooth function \cite{MFNUA1989HSW}. Given that the AD-PDE \eqref{eq:ADPDE} has a unique smooth solution given an appropriate set of input data, the NN should converge to this solution when $n \to \infty$; see \cite{DLASPDE2017SS} for details.
\end{rem}

\begin{rem} [Parallelization]
Referring to \eqref{eq:loss}, parallelization of the training process is trivial. We can choose the number of samples $n_v, n_0, \and n_{b,i}$ in accordance to the available computational resources and decompose and assign the summations to different processing units. The locations of the training points are also arbitrary and can be selected through random drawing or from a fixed grid over space-time.
\end{rem}

\subsection{Mass Transport Cloaking using Neural Networks} \label{sec:NNplan}
So far we have described how to approximate the solution of the AD-PDE \eqref{eq:ADPDE} using a NN for a given set of input data. Next, we discuss how to use this approximation to solve the mass transport cloaking (MTC) problem \eqref{eq:mission}. The idea is to add the NN loss function that captures the constraints of the problem \eqref{eq:mission} to the objective in the form of a penalty term, and obtain an unconstrained optimization problem that can be solved using state-of-the-art stochastic gradient descent algorithms. Such algorithms have been very effective in the deep learning literature \cite{DLP2017C}.

Specifically, to solve the PDE-constrained optimization problem \eqref{eq:mission}, we parameterize the release rate $a_j(t)$ by a polynomial 
$a_j(t) = \sum\nolimits_{k=0}^{n_{\alpha}} \alpha_{jk} t^k,$
%
%
where $\bbalpha_j \in \reals^{n_{\alpha}}$ is the vector of coefficients corresponding to robot $j$.
We also parameterize $\xi_j(t)$ in the definition of the robot paths \eqref{eq:path} by the composition of another polynomial $b_j(t) = \sum_{k=0}^{n_{\beta}} \beta_{jk} t^k$ with parameters $\bbbeta_j \in \reals^{n_{\beta}}$ and the sigmoid function $\sigma: \reals \to [0,1]$ defined as
$ \sigma(b) = {1}/{1 + \exp(-b)} .$
Thus,
\begin{equation} \label{eq:param}
\xi_j(t) = \sigma \left( \sum\nolimits_{k=0}^{n_{\beta}} \beta_{jk} t^k \right) .
\end{equation}
Furthermore, we use a set of $n_d$ points $(t_k, \bbx_k) \in [0,T] \times \Omega_d$ for the discrete approximation of the integral \eqref{eq:obj}.
Noting that the loss function \eqref{eq:loss} captures the constraints in \eqref{eq:mission}, to solve the MTC problem we optimize the following objective function
\begin{equation} \label{eq:obj2}
J(\bbalpha_j, \bbbeta_j, \bbtheta) = \frac{1}{n_d} \sum\nolimits_{k=1}^{n_d} \abs{f(t_k, \bbx_k) - c_d(t_k, \bbx_k)}^2 + \ell(\bbtheta) .
\end{equation}
Optimizing \eqref{eq:obj2} provides the solution of the AD-PDE \eqref{eq:ADPDE} for the time-dependent source term \eqref{eq:source}, with intensities specified by $\bbalpha^*_j$ and paths specified by $\bbbeta^*_j$, that maintains the concentration at the desired level $c_d$ across the safe zone $\Omega_d$.
Note that for simplicity we used the same length scale $l_s$ and number of parameters $n_{\alpha}$ and $n_{\beta}$ for all robots in the source term \eqref{eq:source}; generalization is trivial.


\begin{prop} [Bounded Velocity]
Robots following the paths \eqref{eq:path} with parameterizations \eqref{eq:param} have bounded velocities.
\end{prop}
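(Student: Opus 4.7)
The plan is to differentiate the parameterization of the robot path using the chain rule and then bound each resulting factor separately on its domain.

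First, by the chain rule applied to $\bbp_j(t) = \bbgamma(\xi_j(t))$, the velocity is $\dot{\bbp}_j(t) = \bbgamma'(\xi_j(t))\,\dot{\xi}_j(t)$, so $\|\dot{\bbp}_j(t)\| \le \|\bbgamma'(\xi_j(t))\|\cdot|\dot{\xi}_j(t)|$. By Assumption \ref{assum:path}, $\bbgamma:[0,1]\to\Omega\setminus\Omega_d$ is differentiable, and since $[0,1]$ is compact I will take $\bbgamma\in C^1$ so that $\|\bbgamma'\|$ attains a finite maximum $M_\gamma := \max_{\xi\in[0,1]}\|\bbgamma'(\xi)\|<\infty$. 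This handles the first factor uniformly in $t$.

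Next I would bound $|\dot{\xi}_j(t)|$ coming from the parameterization \eqref{eq:param}. Differentiating $\xi_j(t)=\sigma(b_j(t))$ gives $\dot{\xi}_j(t)=\sigma'(b_j(t))\,\dot{b}_j(t)$, where $b_j(t)=\sum_{k=0}^{n_\beta}\beta_{jk}t^k$ is a polynomial. The key observation is that the sigmoid satisfies $\sigma'(b)=\sigma(b)\,(1-\sigma(b))$, and since $\sigma(b)\in(0,1)$ this yields the elementary bound $|\sigma'(b)|\le 1/4$ for every $b\in\reals$, independently of the coefficients $\bbbeta_j$. The remaining factor $\dot{b}_j(t)=\sum_{k=1}^{n_\beta}k\,\beta_{jk}\,t^{k-1}$ is itself a polynomial, continuous on the compact interval $[0,T]$, hence bounded by some $M_b := \max_{t\in[0,T]}|\dot{b}_j(t)|<\infty$.

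Combining these three estimates, the plan concludes with $\|\dot{\bbp}_j(t)\|\le M_\gamma\cdot\tfrac{1}{4}\cdot M_b$ for all $t\in[0,T]$, which is the desired bound. I do not anticipate any genuine obstacle: the proof is essentially a one-line chain-rule computation together with the standard bound on $\sigma'$ and compactness of $[0,1]$ and $[0,T]$. The only subtlety worth flagging explicitly is the implicit regularity requirement that $\bbgamma$ be $C^1$ rather than merely differentiable; this is already consistent with Assumption \ref{assum:path} and is the mild strengthening one would adopt to make the bound on $\|\bbgamma'\|$ rigorous via the extreme value theorem.
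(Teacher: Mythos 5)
Your proof is correct, and it shares the paper's overall decomposition $\|\dot{\bbp}_j(t)\| = |\dot{\xi}_j(t)|\,\|\bbgamma'(\xi_j(t))\|$ with the first factor handled by compactness of $[0,1]$, but the way you bound $|\dot{\xi}_j|$ is genuinely different from the paper. You use the uniform sigmoid bound $|\sigma'(b)| = \sigma(b)(1-\sigma(b)) \le 1/4$ together with the extreme value theorem for the polynomial $\dot{b}_j$ on the compact interval $[0,T]$; this is shorter and entirely elementary, and it is sufficient because the paths in \eqref{eq:path} are only defined for $t \in [0,T]$. The paper instead argues asymptotically: writing $\dot{\xi} = \dot{b}\,\exp(-b)\,\sigma^2(b)$, it shows $\dot{b}/b \to 0$ and $b/\exp(b) \to 0$, hence $\dot{\xi}(t) \to 0$ as $t \to \infty$, and concludes boundedness from continuity. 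That route buys a bound that does not degrade with the horizon: the exponential decay of $\sigma'(b(t))$ dominates the polynomial growth of $\dot{b}(t)$, so the velocity is bounded (indeed vanishing) even as $t \to \infty$, capturing the intuition that the sigmoid saturates and the robot settles at one end of $\bbgamma$. Your bound, by contrast, is constructive and explicit ($M_\gamma M_b/4$) but the constant $M_b$ grows with $T$, so it would not extend to an unbounded time horizon. One further point in your favor: your remark that Assumption \ref{assum:path} should be read as $\bbgamma \in C^1$ is apt, since mere differentiability on a compact set does not guarantee a bounded derivative; the paper's own proof glosses over this same point.
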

\begin{proof}
Noting that $\bbp(t) = \bbgamma(\xi(t))$, for the speed of the robot we have $\norm{\dot{\bbp}(t)} = |\dot{\xi}(t)| \norm{\nabla_{\xi} \bbgamma}$, where we drop the subscript $j$ for simplicity. Since $\bbgamma(\xi)$ is differentiable and defined over the compact set $[0,1]$, it is bounded. Thus, we need to show that $|\dot{\xi}(t)|$ is bounded.
To see this, note that 
$$ \dot{\xi}(t) = \dot{b} \, \exp(-b) \, \sigma^2(b) = \frac{\dot{b} \, \exp(-b)}{[1 + \exp(-b)]^2} , $$
where $b(t)$ is the polynomial that parameterizes the path of the robot. Then, since
$$ \lim_{t \to \infty} \frac{ \dot{b} }{b} = \lim_{t \to \infty} \frac{n_{\beta} \, \beta_{n_{\beta}} t^{n_{\beta}-1} + o(t^{n_{\beta}-1})}{ \beta_{n_{\beta}} t^{n_{\beta}} + o(t^{n_{\beta}}) } = 0 $$
and $\lim_{b \to +\infty} b/\exp(b) = 0$, we have
$\lim_{t \to \infty} \dot{\xi}(t) = 0 .$
Since $\dot{\xi}(t)$ is also continuously differentiable, it has to be bounded. Intuitively, when $b \to \pm \infty$, the sigmoid function saturates and the robot approaches one end of the curve $\bbgamma$.
\end{proof}

\section{Numerical Experiments} \label{sec:sim}
In this section we present simulation results that illustrate our approach to solving the MTC problem \eqref{eq:mission}.
The domain is depicted in Figure \ref{fig:setup} where we select $\Omega = [-1,1] \times [-1,1]$ and $T = 1$. Before the release of the chemical agent, the system is at rest with zero concentration across the domain, i.e., $g_0(\bbx) = 0$ in \eqref{eq:IC}. At time $t=0$ the release occurs causing the concentration to rise linearly at the corner $\bbx_c = [-1,1]$ of the boundaries $\Gamma_3 \and \Gamma_4$ as
$ c(t, \bbx_c) = \min \set{1, {2t}/{T} } \for t \in [0,T] .$
Note that the concentration $c(t, \bbx_c)$ reaches the maximum value of $1$ at $t=0.5\, T$ and stays constant afterwards. For the boundary $\Gamma_3$, the concentration decreases linearly to zero when moving from $\bbx_c$ toward the corner between $\Gamma_3 \and \Gamma_2$, i.e., in \eqref{eq:BCs} we have
$ g_3(t, \bbx) = - 0.5 \, { c(t, \bbx_c) } \, (x_1 - 1) \for \bbx \in \Gamma_3 , $
see Figure \ref{fig:BC}.
\begin{figure}[t!]
  \centering
    \includegraphics[width=0.42\textwidth]{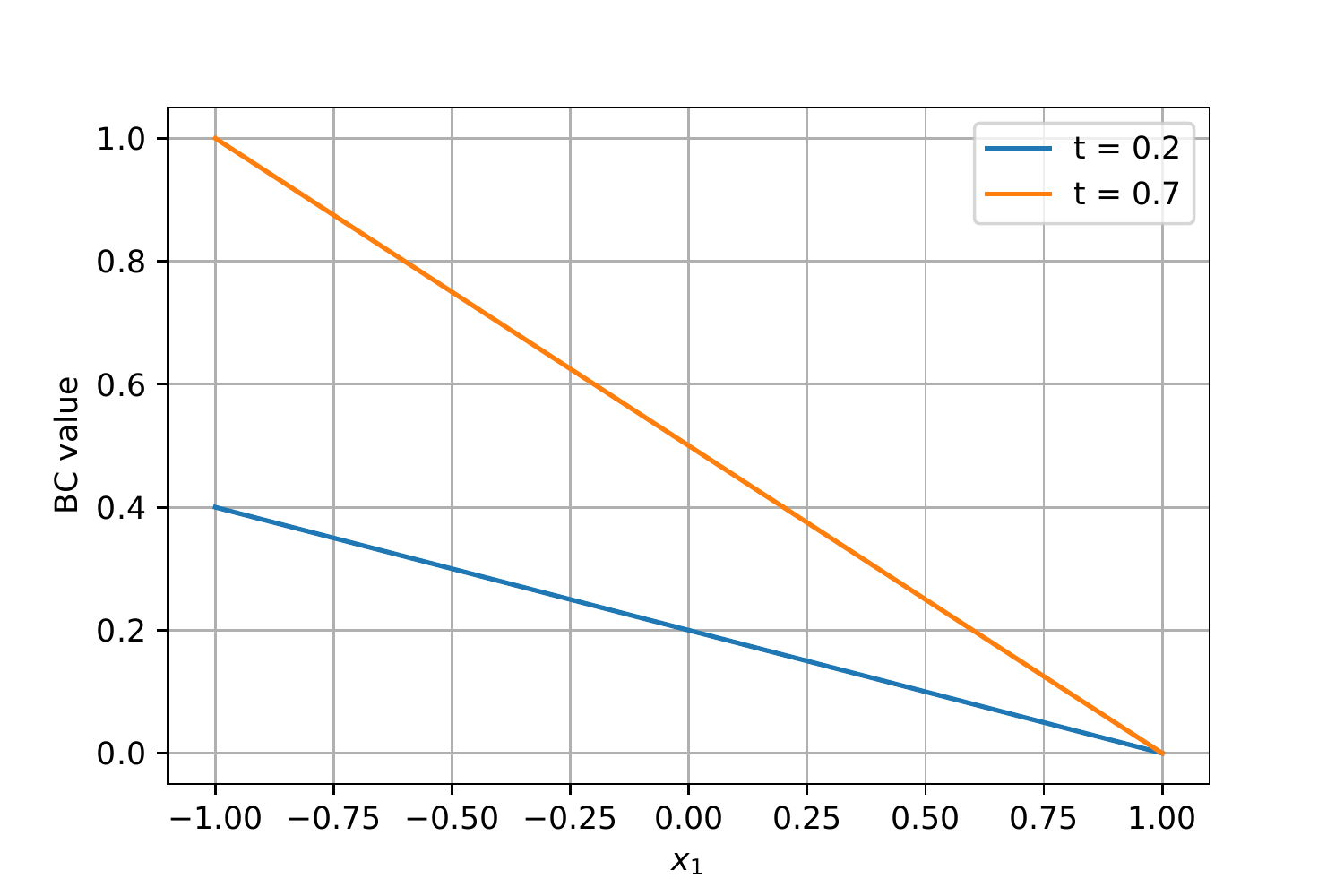}
  \caption{BC value $g_3(t, \bbx)$ for $\Gamma_3$ as a function of spatial coordinate for two different time instances $t=0.2 \and t = 0.7$ before and after $0.5 \, T$.} \label{fig:BC}
\end{figure}
A similar expression is used for $g_4(t, \bbx)$ along $\Gamma_4$. The opposite boundaries $\Gamma_1 \and \Gamma_2$ are set to zero for all time, i.e., $g_1(t, \bbx) = g_2(t, \bbx) = 0$ in \eqref{eq:BCs}.
Finally, we set the diffusivity field to a constant value of $\kappa(t, \bbx) = 1$ and define the velocity vector field for $t \in [0, 0.5T]$ as
\begin{equation} \label{eq:velF}
\bbu(t, \bbx) = \left[ \cos(- {\pi t}/{T}), \, \sin(- {\pi t}/{T}) \right] 
\end{equation}
and $\bbu(t, \bbx) = [0, -1] \for t > 0.5T$, i.e., it has a constant magnitude $\norm{\bbu(t, \bbx)} = 1$ but its direction rotates $90^o$ from $[1,0]$ to $[0, -1]$ within $t \in [0, 0.5T]$ and then stays constant.
Note that variations in the BCs and velocity field stop at $t=0.5T$ allowing $c(t,\bbx)$ to move toward steady-state.

We define the safe zone $\Omega_d$ as a square centered at $\bbx_d = [-0.4, 0.3]$ with side length of $0.2$ and set $c_d = 0$, i.e., we wish to maintain zero concentration at the safe zone as it was before the release of the chemical.
In the following simulations, we use $N=2$ mobile robots that collaboratively cloak the safe zone $\Omega_d$.
Referring to Assumption \ref{assum:path}, we constrain the robots to move along a circular path with radius $r = 0.4$ around $\bbx_d$. Particularly, the admissible path for robot $j$ is defined as
$$ \bbp_j(t) = \bbgamma(\xi_j) = \bbx_d + r \left[ \cos(2 \pi \xi_j), \, \sin(2 \pi \xi_j) \right] , $$
where $\xi_j \in [0,1]$ is the parameterization corresponding to robot $j$.
In order to describe the source term \eqref{eq:source}, we use $n_{\alpha} = 5$ parameters for the release rates of the robots and $n_{\beta} = 4$ parameters for their paths, where we fix the intercepts $\beta_{10} = 0 \and \beta_{20} = \ln(3)$ of the polynomials in \eqref{eq:param} to specify the initial angular positions of the robots on the curve $\bbgamma$. These values correspond to $\vartheta_1(0) = 180^o \and \vartheta_2(0) = 270^o$ where $\vartheta_j(t) = 2 \pi \xi_j(t)$.
We also set the characteristic length of the Gaussian source \eqref{eq:source} to $l_s = 0.04$; see Section \ref{sec:NNplan} for details.
Then, the robots optimally adjust their velocities along this curve and their release rates in response to the change in the BCs $g_3(t,\bbx) \and g_4(t,\bbx)$ and the velocity field $\bbu(t, \bbx)$ to maintain the concentration $c_d$ in $\Omega_d$.
In order to prevent collision among the robots, we add the following penalty term (with an appropriate weight) to the training objective \eqref{eq:obj2}
\begin{equation*} 
J_c(t) = \sum\nolimits_{j=1}^N \sum\nolimits_{k=j+1}^N \exp \left[ - \left( \frac{2 \pi [ \xi_j(t) - \xi_k(t) ] }{ \vartheta_0 } \right)^2 \right] ,
\end{equation*}
measuring the mutual angular distance of the robots, where $\vartheta_0$ determines the angle margin among robots and is set to $\vartheta_0 = 4^o$ in the following results.

We utilize the \textsc{TensorFlow} software to solve the desired MTC problem \eqref{eq:mission}; see \cite{tensorflow2016ABCCD}.
The test functions $v_i(t, \bbx)$, introduced in Section \ref{sec:NN}, are selected to be trilinear finite element (FE) basis functions for $3$D hexagonal elements that are centered at arbitrary points in space-time and, the numerical integration in \eqref{eq:varPDE} is performed using a two-point Gauss-Legendre quadrature at each dimension; see \cite{meJ5} for more details.
We use a multi-layer perceptron (MLP) NN architecture consisting of three dense hidden layers with $10, 20, \and 30$ neurons, respectively and sigmoid activation functions. In order to train the NN, we use $50$ samples of the temporal coordinate as well as $40 \times 40$ samples of the spatial domain resulting in $n_v = 50 \times 40 \times 40$ training points in the domain interior, $n_0 = 40 \times 40$ training points for the IC, $n_{b,i} = 50 \times 40$ training points for each BC, and finally $n_d = 50 \times 16$ training points for the discrete approximation of the objective \eqref{eq:obj} at the safe zone $\Omega_d$.
We set $\bbw = [1, 1, 10]$. Then, the unsupervised training for the MTC problem \eqref{eq:mission} is performed using the \texttt{AdamOptimizer}; see \cite{DLP2017C} for details.
To validate the resulting optimal control inputs for the robots, we solve the AD-PDE \eqref{eq:ADPDE} for these inputs using an in-house FE solver that we have developed using an explicit forward Euler scheme and \textsc{DiffPack} libraries \cite[Ch. 3.10]{CPDE2013L}. We use a mesh with $50 \times 40 \times 40$ nodes corresponding to the number of training points above.

Figure \ref{fig:conc} shows the evolution of the concentration at $\bbx_d$ obtained from the NN and FE solutions along with maximum and minimum (worst case) concentration values across the safe zone according to the FE solver.
\begin{figure}[t!]
        \centering
	\includegraphics[width=0.43\textwidth]{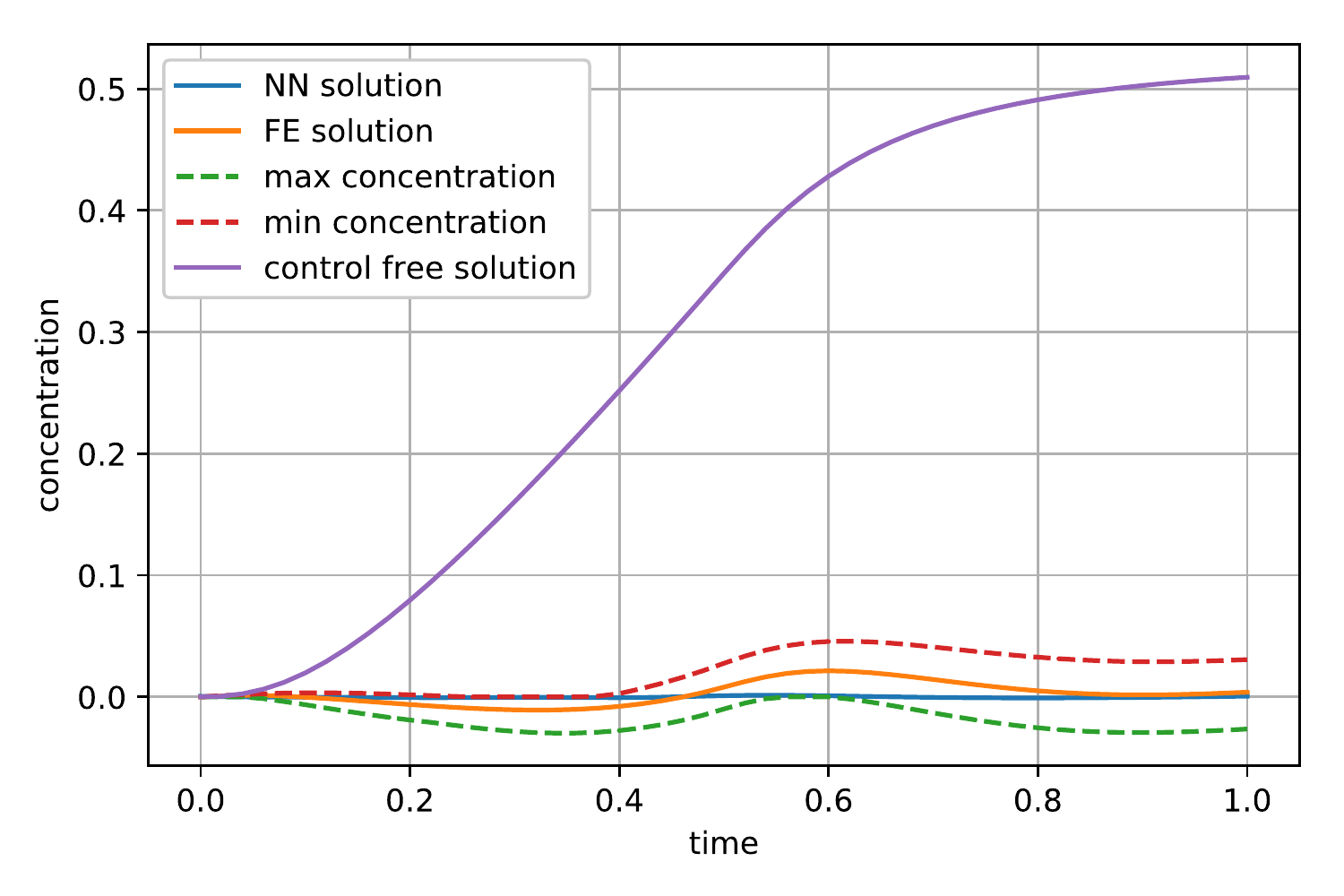}
	\caption{Concentration level at $\bbx_d$ obtained from the NN and FE solutions as a function of time. The dashed lines delineate the maximum deviations from the desired value $c_d=0$ across the safe zone $\Omega_d$ according to the FE solution.} \label{fig:conc}
\end{figure}
It can be seen that the concentration stays close to the desired value $c_d=0$. The highest concentration levels in $\Omega_d$ are more than an order of magnitude smaller than the maximum concentration $c(T, \bbx_c)=1$, meaning that external concentration levels are effectively invisible in the safe zone $\Omega_d$.
Next, Figure \ref{fig:sourceParam} shows the optimal release rates $a_j^*(t)$ and angular positions $\vartheta_j^*(t) = 2 \pi \xi^*_j(t)$ as functions of time.
\begin{figure}[t!]
        \centering
    \begin{subfigure}[b]{0.43\textwidth}
                \includegraphics[width=\textwidth]{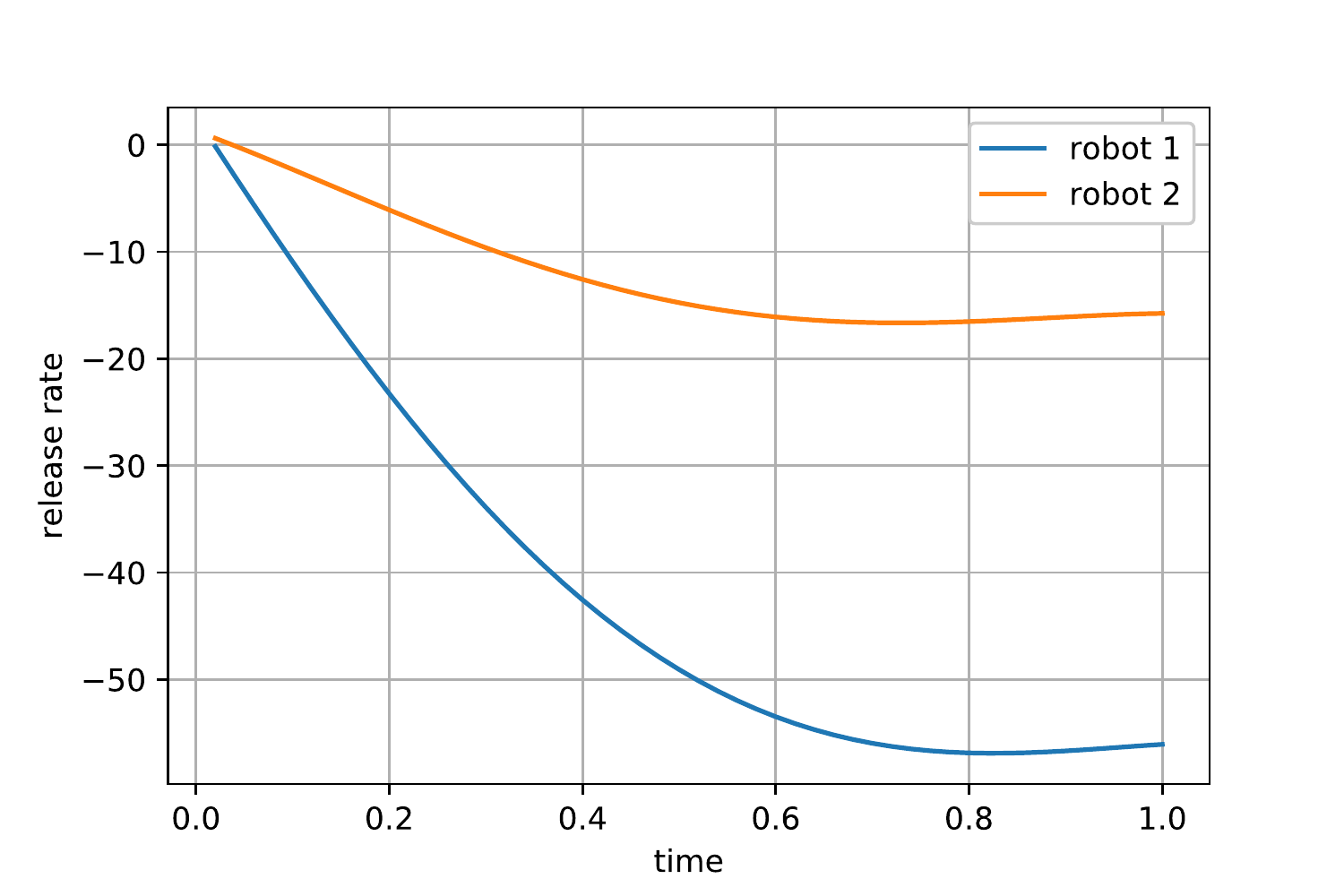}
    \caption{optimal release rates $a_j^*(t)$} \label{fig:intensity}
\end{subfigure}
    \quad
    \begin{subfigure}[b]{0.43\textwidth}
                \includegraphics[width=\textwidth]{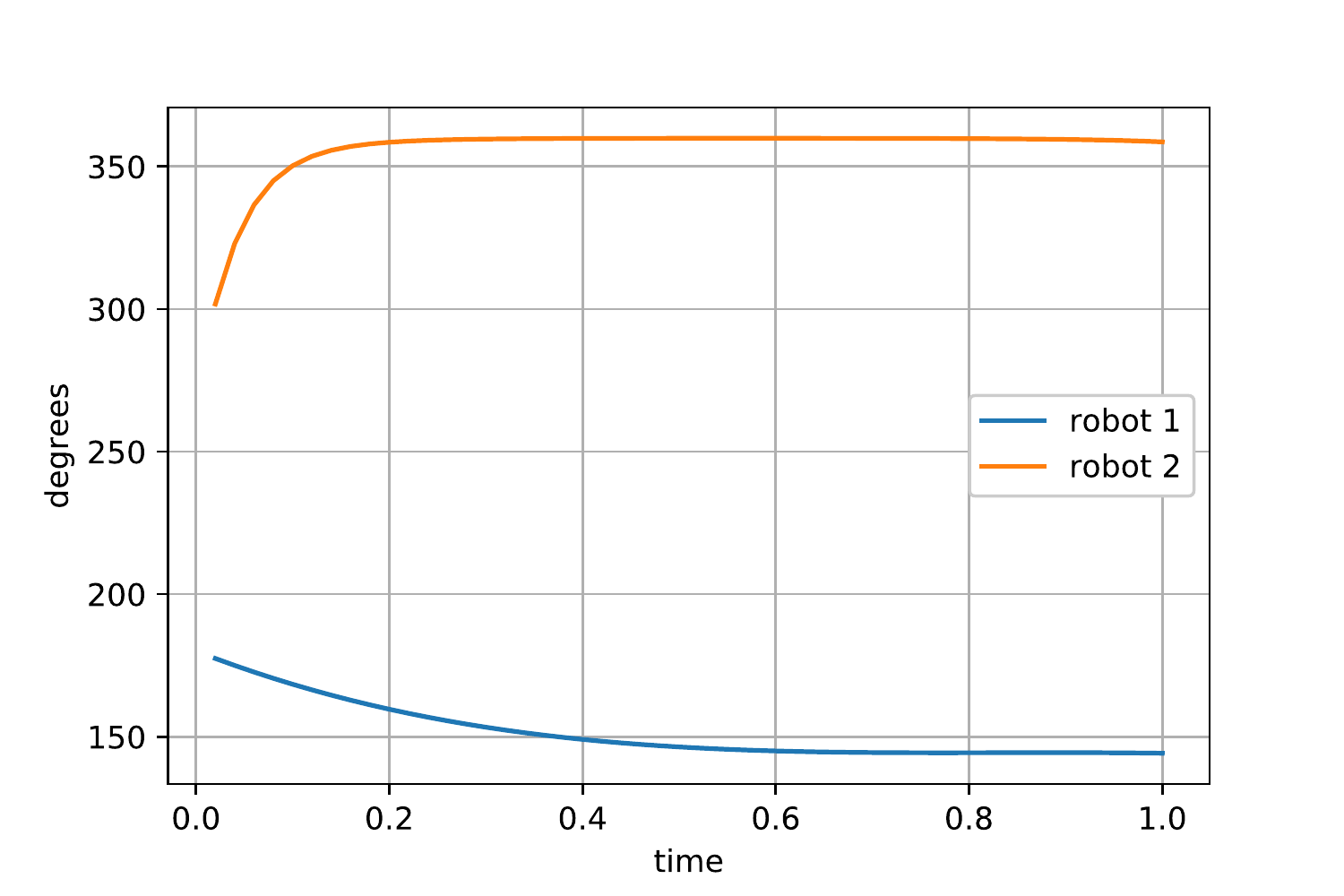}
    \caption{optimal angular positions $\vartheta_j^*(t)$} \label{fig:angle}
\end{subfigure}
	\caption{Optimal source parameters as a function of time to maintain the concentration levels at $\Omega_d$ at the specified level $c_d$.} \label{fig:sourceParam}
\end{figure}
Note that the release rates in Figure \ref{fig:intensity} decrease to negative values in response to the growth of the concentration at the corner $\bbx_c$ of the boundary $\Gamma_3$. This corresponds to robots acting as sinks and collecting the extra chemical to maintain desired concentration levels in $\Omega_d$. 
Referring to Figure \ref{fig:angle}, the angular positions are adjusted in response to the rotation of the velocity field $\bbu(t)$ so that the safe zone $\Omega_d$ is positioned between the robots. Not that since robot 2 is positioned downstream of the safe zone, for cloaking it relies on diffusive transport, which is as strong as advection here.
Finally, Figure \ref{fig:solPDE} shows snapshots of the concentration field, given by the NN approximation of the solution of the AD-PDE \eqref{eq:ADPDE}, at different time instances.
\begin{figure*}
	\centering
	\begin{subfigure}[b]{0.31\textwidth}
		\includegraphics[width=\textwidth]{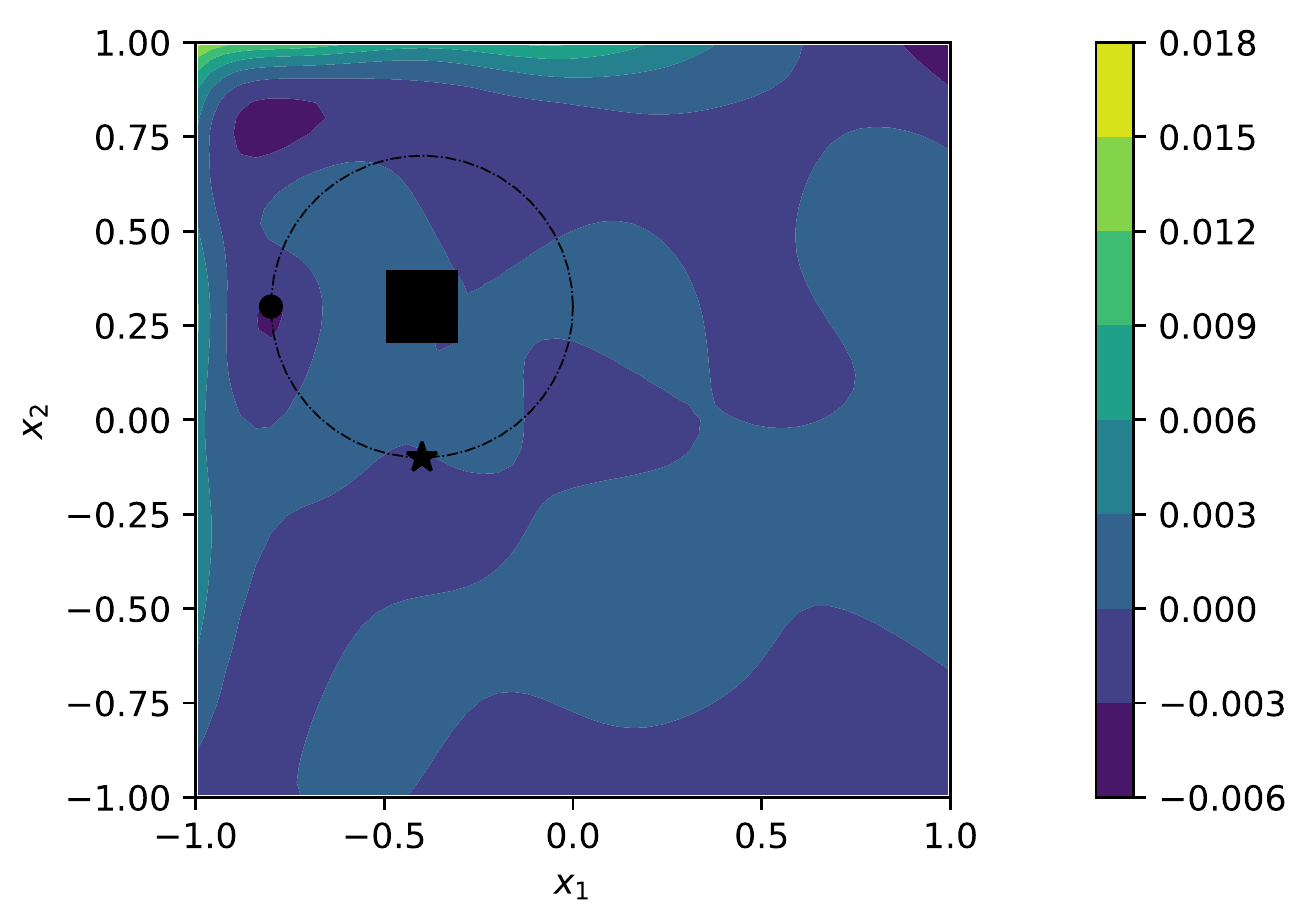}
	\caption{$t=0$}
	\end{subfigure}
	\begin{subfigure}[b]{0.31\textwidth}
		\includegraphics[width=\textwidth]{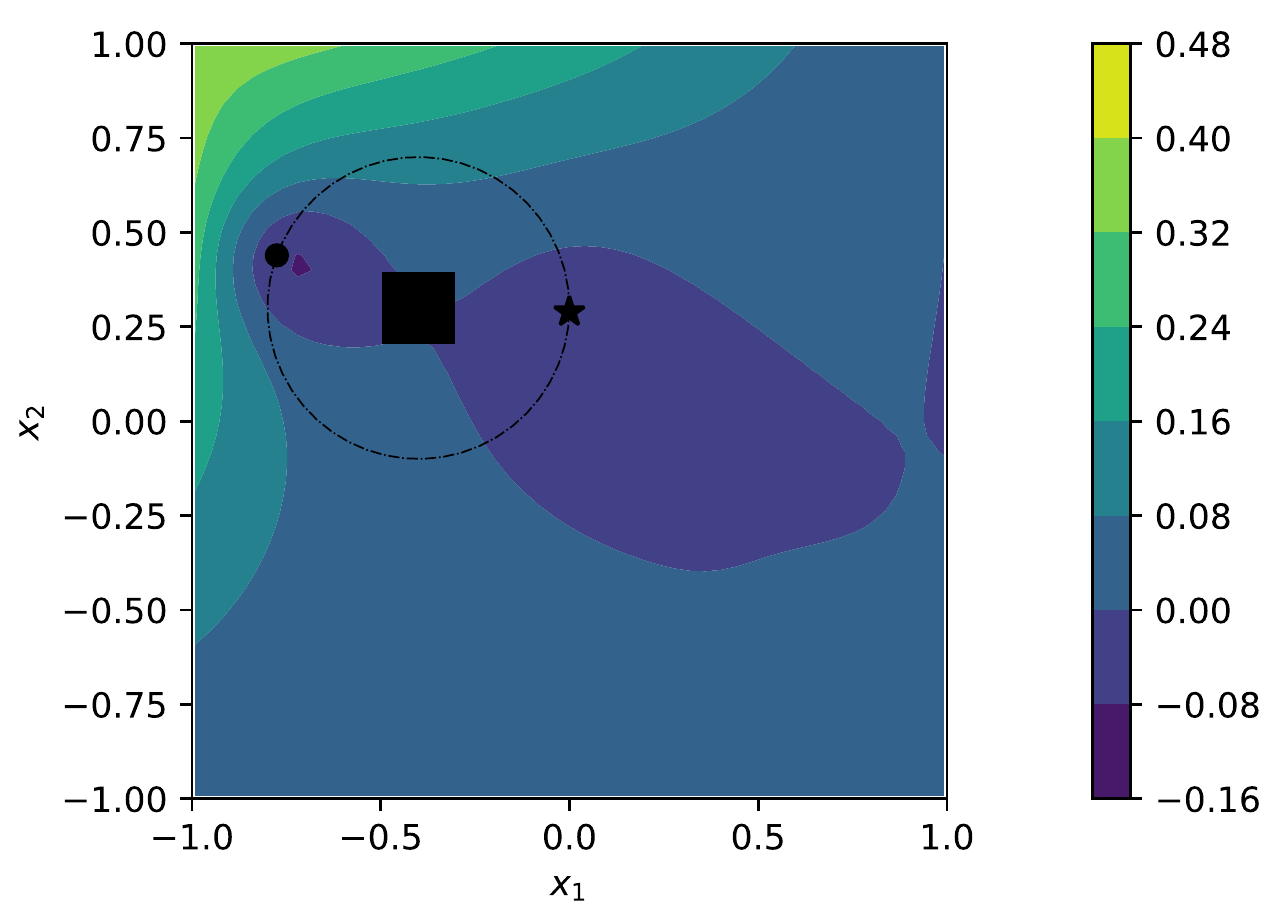}
	\caption{$t=0.2$}
\end{subfigure}
	\begin{subfigure}[b]{0.31\textwidth}
		\includegraphics[width=\textwidth]{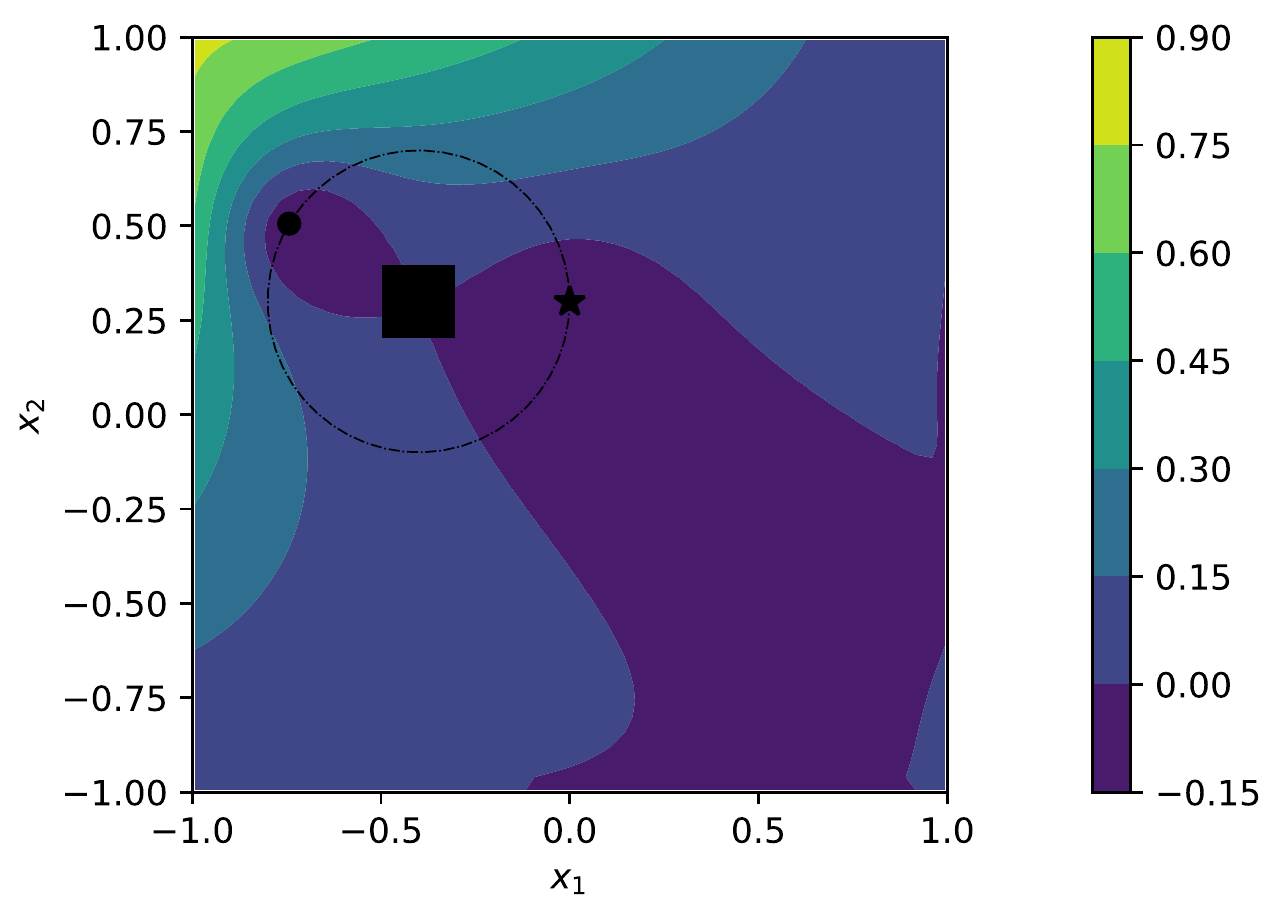}
	\caption{$t=0.4$}
	\end{subfigure}
	\begin{subfigure}[b]{0.31\textwidth}
		\includegraphics[width=\textwidth]{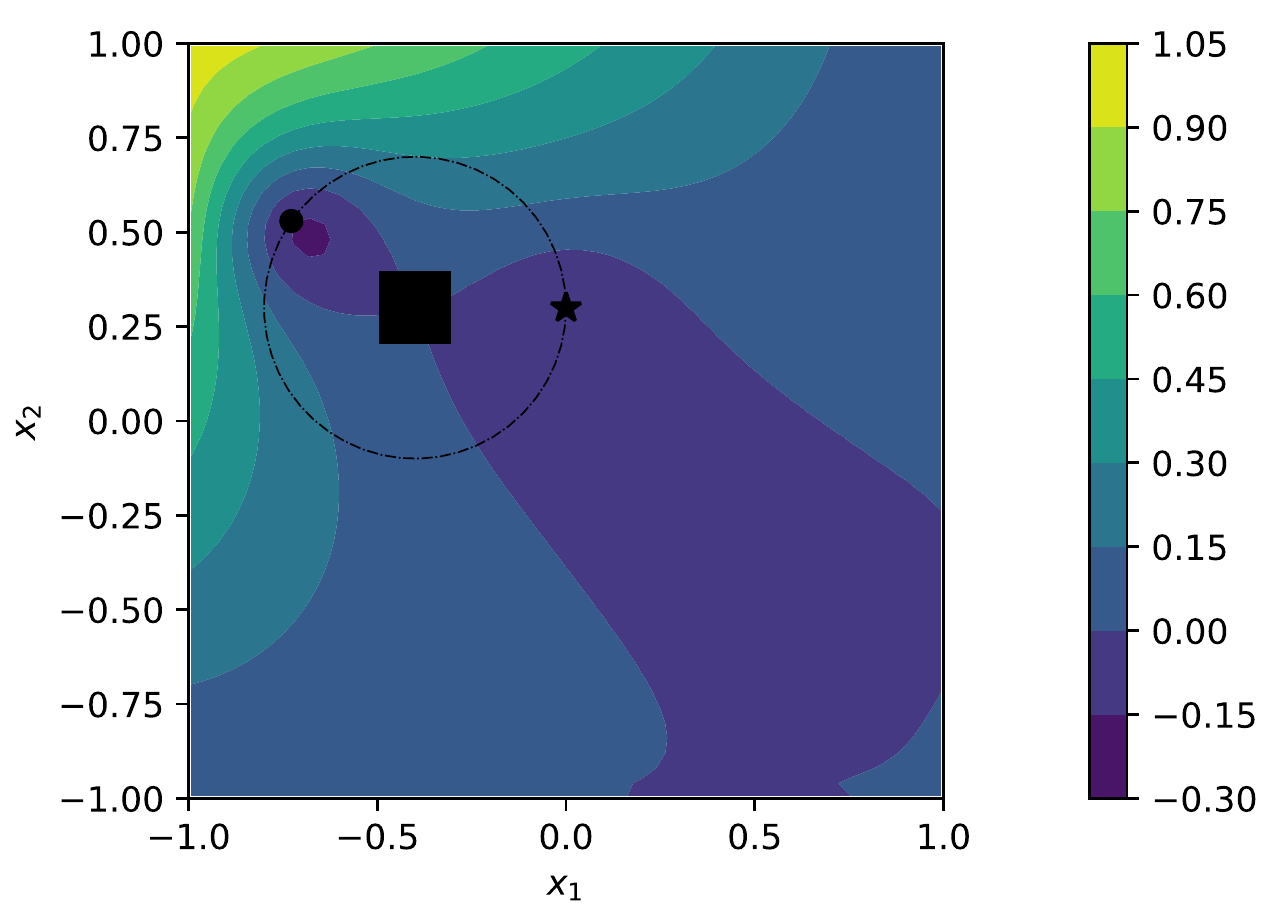}
	\caption{$t=0.6$}
	\end{subfigure}
	\begin{subfigure}[b]{0.31\textwidth}
		\includegraphics[width=\textwidth]{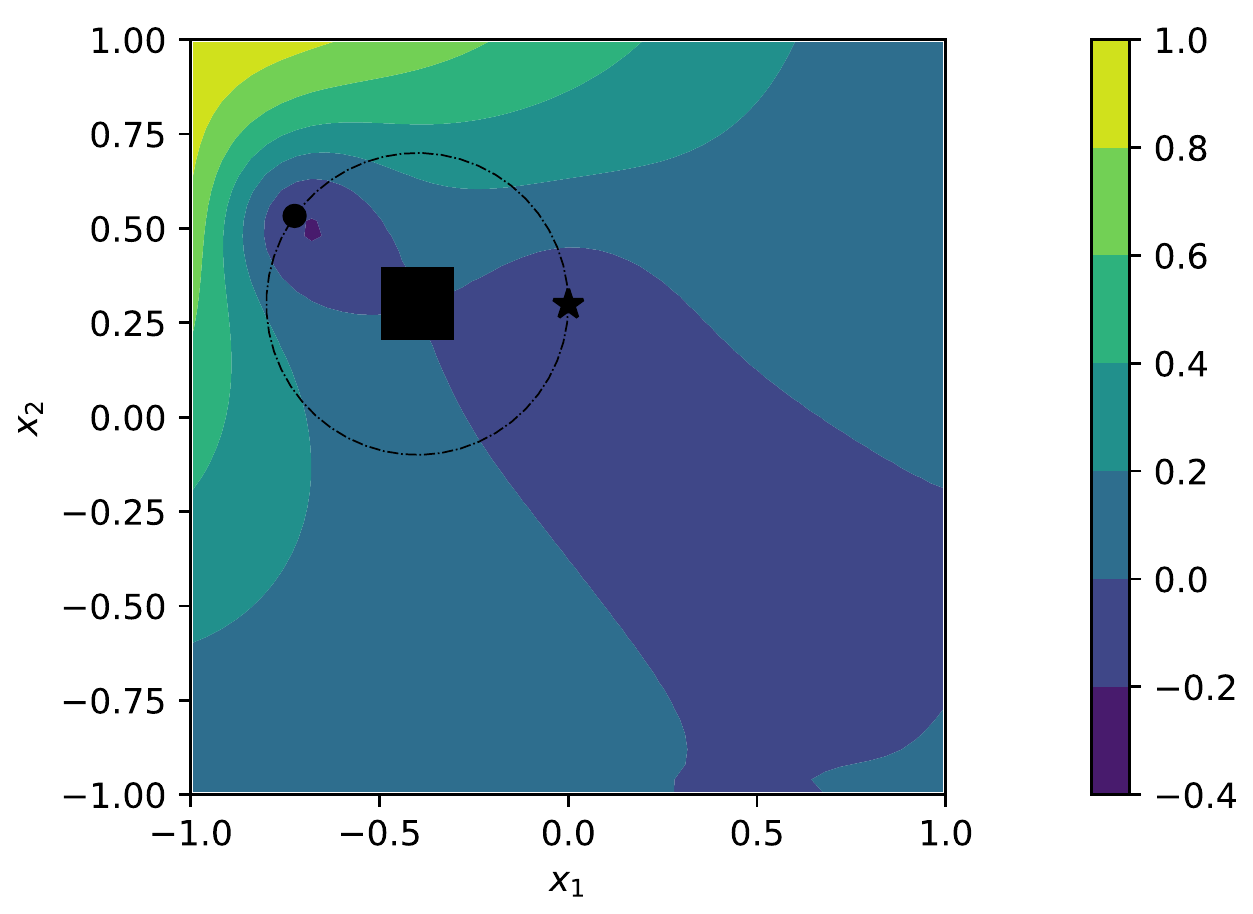}
	\caption{$t=0.8$}
	\end{subfigure}
	\begin{subfigure}[b]{0.31\textwidth}
		\includegraphics[width=\textwidth]{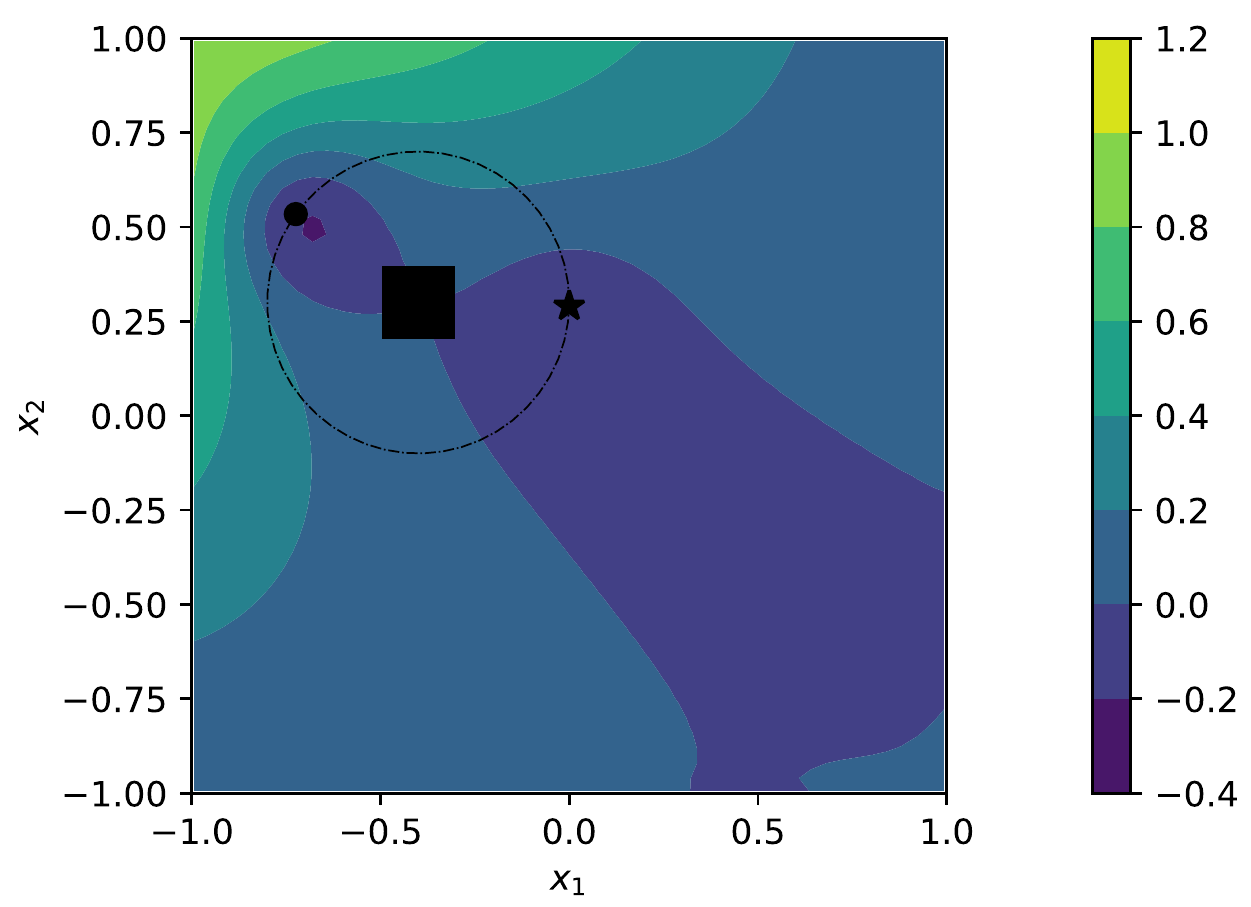}
	\caption{$t=1.0$}
\end{subfigure}
\caption{Evolution of the concentration field as the solution of the AD-PDE \eqref{eq:ADPDE} at different time instances. The black square denotes the safe zone $\Omega_d$ whereas the black circle and star denote the location of robots $1 \and 2$, respectively. Moreover, the dashed circle shows the curve $\bbgamma$. Note that in all snapshots the $c(t, \bbx)=0$ level set passes through $\Omega_d$.}	\label{fig:solPDE}
\end{figure*}
 Note that the maximum concentration that occurs at corner $\bbx_c$, grows up to $t=0.5T$. After this time, the concentration field evolves toward steady-state and the variations become smaller. As a result, the designed optimal controls also reach a steady state after $t=0.5T$, as is evident from Figure \ref{fig:sourceParam}.
An animation of the optimal source field and the corresponding concentration field is given in \cite{NNPDEanim}. Also an animation comparing the NN and FE solutions can be found in \cite{MTCerrAnim}. Note that the NN solution $f(t,\bbx)$ obtained by softly enforcing the PDE constraint in \eqref{eq:mission}, closely matches the FE solution $\hhatf(t, \bbx)$ for the final optimal source field. Specifically, the L$^2$-error between the solutions is $\left( \int_0^T \int_{\Omega} [f(t,\bbx) - \hhatf(t, \bbx)]^2 d\bbx \, dt \right)^{0.5} = 1.02 \times 10^{-02}.$

Table \ref{table:objective} presents a comparative study of the performance of the NN as a function of the number of trainable parameters $n$ and the number of training points. Particularly, it summarizes the individual terms in the objective function \eqref{eq:obj2}. The first two cases correspond to a MLP with two hidden layers with $10 \and 20$ neurons whereas the last NN, used in the previous simulations, has three hidden layers with $10, 20, \and 30$ neurons.
\begin{table*}
\centering
\renewcommand{\arraystretch}{1.3}
\captionsetup{justification=centering}
\caption{Number of training points and averaged objective values per training point for the terms in objective \eqref{eq:obj2}.}
\begin{tabular}{|c|c||c|c|c|c|} 
 \hline
 trainable parameters			& objective term 			& concentration error 		& variational form 				& IC					& BCs					\\ 	[0.5ex] 
 \hline\hline
 \multirow{2}{*}{$n = 281$}		& training points		& $n_d = 25 \times 16$			& $n_v = 25 \times 20 \times 20$	& $n_0 = 20 \times 20$	& $n_{b,i} = 25 \times 20$		\\ 	 \cline{2-6}
 							& objective value			& $3.51 \times 10^{-4}$		& $2.65 \times 10^{-5}$			& $7.78 \times 10^{-6}$	& $9.12 \times 10^{-5}$		\\ 	 \hline \hline
 \multirow{2}{*}{$n = 281$}		& training points		& $n_d = 50 \times 16$			& $n_v = 50 \times 40 \times 40$	& $n_0 = 40 \times 40$	& $n_{b,i} = 50 \times 40$	\\ 	 \cline{2-6}
 							& objective value			& $7.00 \times 10^{-5}$		& $2.06 \times 10^{-5}$			& $6.83 \times 10^{-6}$	& $5.44 \times 10^{-5}$		\\ 	 \hline \hline
 \multirow{2}{*}{$n = 921$}		& training points		& $n_d = 50 \times 16$			& $n_v = 50 \times 40 \times 40$	& $n_0 = 40 \times 40$	& $n_{b,i} = 50 \times 40$	\\ 	 \cline{2-6}
 							& objective value			& $4.28 \times 10^{-5}$		& $2.10 \times 10^{-6}$			& $1.29 \times 10^{-6}$	& $6.76 \times 10^{-6}$		\\ 	 \hline
\end{tabular}
\label{table:objective}
\end{table*}
The third column in this table reports the deviation of the predicted concentration from the desired value $c_d = 0$ in the safe zone, given by objective \eqref{eq:obj}.
Comparing the first two cases, we observe that for a fixed architecture and number of trainable parameters $n$, the normalized objective values get smaller as the number of training points increases. Moreover, comparing the last two cases, it can be seen that as the number of trainable parameters $n$ increases, the capacity of the NN increases, which amounts to smaller objective values.
%


Note that based on the above simulations, a simple feed-forward MLP-NN with only $n = 921$ trainable parameters continuously captures the $3$D concentration field across space-time. Solving the same problem using the FE method would require a mesh with $n_v = 50 \times 40 \times 40 = 8 \times 10^4$ nodes and a separate interpolation for points outside the mesh.
Note also that the proposed NN solution to the MTC problem can easily account for additional objectives and constraints, such as collision avoidance, as we already demonstrated, temporally and spatially varying concentrations in the safe zone, minimization of control effort, e.g., traveled distance or release amount, parametric PDE input-data, and desired ranges, rather than exact values, of concentration in the safe zone.
For instance, the integrand in objective \eqref{eq:obj} can be replaced with
$$ \bbarw_1 \max \left[ c(t, \bbx) - c_d(t, \bbx), 0 \right]^2 + \bbarw_2 \max \left[ c_d(t, \bbx) - c(t, \bbx) , 0 \right]^2, $$
where $\bbarw_1 \and \bbarw_2$ are weights separately penalizing concentrations above and below the desired value, respectively. If concentrations below $c_d$ are desirable, we set $\bbarw_2=0$. Note that by setting $\bbarw_1=\bbarw_2=1$, we recover the least-square objective \eqref{eq:obj}.
%

\section{Conclusion} \label{sec:concl}
We considered the MTC problem using teams of mobile robots that carry sources that counteract the release of a chemical in the environment. The goal of the robots is to steer the mass flux around a desired region, also called a safe zone, so that it remains unaffected by the external concentration.
We utilized the AD-PDE to mathematically model the transport of a chemical in the domain and formulated the problem of planning the paths and release rates of the robots as a PDE-constrained optimization. We proposed a new approach to this problem using a NN to approximate the solution of the AD-PDE. Particularly, we defined a novel loss function based on the variational form of the PDE that facilitates the training process by lowering the differentiation order and using the integral form of the PDE as opposed to its differential form. Given this loss function, we reformulated the planning problem as an unsupervised model-based learning problem. We presented simulation results that demonstrate the ability of our method to solve the desired MTC problem.



\ifCLASSOPTIONcaptionsoff
  \newpage
\fi

\bibliographystyle{ieeetr}\bibliography{MyBibliography}

\end{document}